\newtheorem{thm}{Theorem}[section]
\newtheorem{rem}{Remark}
\DeclareMathOperator*{\softmin}{softmin}
\begin{document}
\title{Sailing Through Point Clouds: Safe Navigation Using Point Cloud Based Control Barrier Functions}

\author{Bolun Dai$^{*1}$,~\IEEEmembership{Student Member,~IEEE}, Rooholla Khorrambakht$^{*1}$,~\IEEEmembership{Student Member,~IEEE},\\ Prashanth Krishnamurthy$^{1}$,~\IEEEmembership{Member,~IEEE}, Farshad Khorrami$^{1}$,~\IEEEmembership{Senior Member,~IEEE}
\thanks{$^{*}$ Equal Contribution.}
\thanks{Manuscript received: March 26th, 2024; Revised: June 17th, 2024; Accepted: July 14th, 2024.}%
\thanks{This paper was recommended for publication by Editor A. Bera upon evaluation of the Associate Editor and Reviewers' comments. This work was supported in part by the ARO under grant W911NF-22-1-0028 and the NYUAD Center for Artificial Intelligence and Robotics (CAIR), funded by Tamkeen under the NYUAD Research Institute Award CG010.}%
\thanks{$^{1}$Bolun Dai, Rooholla Khorrambakht, Prashanth Krishnamurthy, and Farshad Khorrami are with Control/Robotics Research Laboratory, Electrical~\&~Computer Engineering Department, Tandon School of Engineering, New York University, Brooklyn, NY 11201
{\tt\footnotesize \{bolundai, rk4342, prashanth.krishnamurthy, khorrami\}@nyu.edu}}%
\thanks{Digital Object Identifier (DOI): see top of this page.}
}

% The paper headers
\markboth{IEEE Robotics and Automation Letters. Preprint Version. Accepted July, 2024}
{Dai \MakeLowercase{\textit{et al.}}: Sailing Through Point Clouds: Safe Navigation Using Point Cloud Based Control Barrier Functions}

% make the title area
\maketitle
\urldef{\video}\url{https://youtu.be/P9NPv1f3kXQ}
\begin{abstract}
The capability to navigate safely in an unstructured environment is crucial when deploying robotic systems in real-world scenarios. Recently, control barrier function (CBF) based approaches have been highly effective in synthesizing safety-critical controllers. In this work, we propose a novel CBF-based local planner comprised of two components: \textbf{\textit{Vessel}} and \textbf{\textit{Mariner}}. The \textbf{\textit{Vessel}} is a novel scaling factor based CBF formulation that synthesizes CBFs using only point cloud data. The \textbf{\textit{Mariner}} is a CBF-based preview control framework that is used to mitigate getting stuck in spurious equilibria during navigation. To demonstrate the efficacy of our proposed approach, we first compare the proposed point cloud based CBF formulation with other point cloud based CBF formulations. Then, we demonstrate the performance of our proposed approach and its integration with global planners using experimental studies on the Unitree B1 and Unitree Go2 quadruped robots in various environments.
\end{abstract}
\begin{IEEEkeywords}
Robot Safety; Collision Avoidance; Motion and Path Planning.
\end{IEEEkeywords}
\IEEEpeerreviewmaketitle
\section{Introduction}
\label{sec:introduction}
\IEEEPARstart{M}{obile} robots have been deployed to navigate unstructured environments autonomously for various operations~\cite{JianYLLLWL23, UnluGCTK24, DaiKPK23, RawlingsMD17, AmesCENST19, GoncalvesKTK24, ThirugnanamZS22, ZengZS21}. Traditionally, work has been done in generating motion plans for navigation using pre-built maps with global motion planners like rapidly exploring random trees (RRTs)~\cite{LaValle06}. In~\cite{LiuWMSBTK17}, a safe corridor based approach is used to reduce the safe set to polytopes along a pre-planned path, which enables online replanning. However, when the map has high granularity as well as when the map is constructed/updated in real-time, it would be costly to frequently replan using these global planners, creating the need for a local planner that directly operates on sensor measurements while ensuring safety. Recently, control barrier functions (CBFs)~\cite{AmesCENST19} have been gaining popularity in synthesizing safe control actions. One of the main benefits of CBF-based approaches is that they can transform nonlinear and nonconvex constraints into linear ones, greatly increasing the computation speed. 

One main challenge in utilizing CBF-based methods for navigation is the synthesis of CBFs~\cite{DaiKK22, DaiHKK23}. Recently, work~\cite{DaiKKGTK23, DaiKKK23, WeiDKKK24} has been done using differentiable optimization based growth distance~\cite{OngG96, GilbertO94} CBFs to model robots and their surroundings using a wider range of geometric primitives~\cite{TracyHM22}. Although~\cite{DaiKKGTK23} effectively synthesizes safe controls, it requires an extra step of decomposing the environment into a collection of primitives. In navigation tasks, the environment is commonly represented by point clouds or their derivatives, such as voxel grids and octomaps. There are recent works on synthesizing CBFs directly from point cloud data, such as~\cite{JianYLLLWL23, UnluGCTK24, ZhangGF23, CosnerRMUYAB22, SingletaryKBBTA21}. In~\cite{JianYLLLWL23}, the CBF formulation relies on encapsulating the obstacles with ellipsoids while representing the robot with a sphere (possibly causing overbounding). In~\cite{UnluGCTK24}, the obstacles are modeled directly using point clouds, and a CBF formulation is used based on the distance between a point and a smoothed rectangle. In~\cite{CosnerRMUYAB22, SingletaryKBBTA21}, the CBFs are modeled between a point cloud and a spherical robot, however, their CBF formulation is not continuously differentiable due the use of the minimum function. In~\cite{ZhangGF23}, a point cloud based CBF is learned, however, it is unclear whether the learned CBF is valid or can scale to real-life point cloud measurements. Compared with prior point cloud based CBF approaches, the CBF formulation proposed in this paper is theoretically valid and directly operates on the point clouds or similar representations (e.g., voxels or octomaps) without any post-processing (which is prone to failure with non-zero probability). Furthermore, our method models the robot using more general shapes, i.e., higher-order ellipsoids, which is a superset of prior works~\cite{JianYLLLWL23, UnluGCTK24, CosnerRMUYAB22, SingletaryKBBTA21}.

Once a valid CBF is synthesized, a CBF-based quadratic program (CBFQP) is commonly used to generate safe actions, which can be seen as model predictive control (MPC) with a one-timestep preview horizon. This myopic nature makes CBF-based methods susceptible to getting stuck in spurious equilibria~\cite{GoncalvesKTK24}, which makes it challenging to utilize CBF-based methods for navigation tasks. In~\cite{GoncalvesKTK24}, a circulation constraint was introduced to avoid spurious equilibria by inducing the robot to circulate obstacles. There are recent efforts in combining CBFs with MPC~\cite{JianYLLLWL23, ZengZS21}. Although the CBF constraint is linear for one timestep, it becomes nonlinear and nonconvex for longer preview horizons. This often makes the CBF-MPC a nonlinear MPC (NMPC), which is computationally expensive to solve.

% Contribution
To address the various challenges outlined above, this paper proposes a novel point cloud CBF based local planner. The main contributions of this paper are as follows: 
\begin{enumerate}
    \item proposed {\it\textbf{Vessel}}: a novel growth distance based point cloud CBF formulation;
    \item proposed {\it\textbf{Mariner}}: a novel CBF-based local preview controller that can better mitigate the spurious equilibrium issue;
    \item empirically validated the proposed safe local planner ({\it\textbf{Vessel}} + {\it\textbf{Mariner}}) in simulation and the real world on the Unitree B1 and Unitree Go2 quadruped robots;
    \item integrated the proposed safe local planner with a global planner and validated its performance in indoor environments.
\end{enumerate}
Compared with previous works, the main benefits of our proposed method are:
\begin{enumerate}
    \item the CBF formulation (Vessel) is more general and does not require post-processing of the point cloud data;
    \item the local planner (Mariner) can replan in real-time and directly operates on sensor readings without relying on a mapping algorithm;
    \item the combination of Mariner and a global planner (e.g., RRT$^\star$) makes the generated motion plan biased towards shorter paths and straight lines.
\end{enumerate}
The remainder of this paper is structured as follows. Section~\ref{sec:preliminaries} briefly reviews CBFs. In Section~\ref{sec:problem_formulation}, the safe indoor navigation problem is formulated. In Section~\ref{sec:method}, we present the Vessel and Mariner formulation. In Section~\ref{sec:experiments}, we first compare our proposed method with existing point cloud based CBF methods for safe navigation; then, we perform ablation studies to show the effectiveness of each component of our proposed approach. Additionally, real-world experiments were performed to demonstrate the effectiveness of our proposed approach. In Section~\ref{sec:limitations}, we discuss the limitations of our proposed approach. Finally, in Section~\ref{sec:conclusion}, we conclude the paper with a discussion on future directions.
\section{Preliminaries}
\label{sec:preliminaries}
In this section, we provide a brief introduction to CBFs~\cite{AmesCENST19} and CBF-based safe control synthesis. Consider a control affine system
\begin{equation}
    \dot{\bm{x}} = \bm{F}(\bm{x}) + \bm{G}(\bm{x})\bm{u}
\label{eq:control_affine_sys}
\end{equation}
where the state is represented as $\bm{x} \in \mathbb{R}^n$, the control as $\bm{u} \in \mathbb{R}^m$, the drift as $\bm{F}: \mathbb{R}^n \rightarrow \mathbb{R}^n$, and the control matrix as $\bm{G}: \mathbb{R}^n \rightarrow \mathbb{R}^{n \times m}$. Following~\cite{AmesCENST19}, define two sets $\mathcal{C}$ and $\mathcal{D}$ where the relationship between the two sets is $\mathcal{C} \subset \mathcal{D} \subset \mathbb{R}^n$. Let $\bm{h}:\mathcal{D} \rightarrow \mathbb{R}$ be a continuously differentiable function that has $\mathcal{C}$ as its 0-superlevel set, i.e., 
\begin{equation}
    \mathcal{C} = \{\bm{x} \mid \bm{h}(\bm{x}) \geq 0, \bm{x}\in\mathbb{R}^n\}
\end{equation} 
and $\partial\bm{h}/\partial \bm{x} \neq 0$ for all $\bm{x}\in\partial\mathcal{C}$, where $\partial\mathcal{C} \subset \mathbb{R}^n$ represents the boundary of $\mathcal{C}$. Then, if 
\begin{equation}
    \sup_{\bm{u}\in\mathcal{U}}\Big[\frac{\partial\bm{h}(\bm{x})}{\partial \bm{x}}\Big(\bm{F}(\bm{x}) + \bm{G}(\bm{x})\bm{u}\Big)\Big] \geq -\Gamma(\bm{h}(\bm{x}))
    \label{eq:CBF_constraint}
\end{equation}
where $\mathcal{U}\subset\mathbb{R}^m$ being the set of admissible controls, holds for all $\bm{x}\in\mathcal{D}$ with $\Gamma: \mathbb{R}\rightarrow\mathbb{R}$ being an extended class $\mathcal{K}_\infty$ function\footnote{Extended class $\mathcal{K}_\infty$ functions are strictly increasing with $\Gamma(0) = 0$ and $\lim_{a\rightarrow\infty}\Gamma(a) = \infty$.}, $\bm{h}$ is a CBF on $\mathcal{C}$ and renders the control-invariance of the safe set $\mathcal{C}$ as follows. The CBF constraint is utilized to formulate the safe control problem as a CBFQP~\cite{AmesCENST19}, which has the form of
\begin{align}
\label{eq:cbfqp}
    \min_{\bm{u}\in\mathcal{U}}\ &\ \|\bm{u}_\text{ref} - \bm{u}\|_2^2\\
    \text{subject to}\ &\ \frac{\partial\bm{h}(\bm{x})}{\partial \bm{x}}\Big(\bm{F}(\bm{x}) + \bm{G}(\bm{x})\bm{u}\Big) \geq -\Gamma(\bm{h}(\bm{x}))\nonumber
\end{align}
with $\bm{u}_\text{ref} \in \mathbb{R}^m$ being a reference control action obtained using a performance controller~\cite{AmesCENST19}. The solution to the CBFQP is a safe controller that can keep the system in the safe set $\mathcal{C}$. The performance controller can be obtained using a variety of methods. Common performance controllers include proportional-derivative (PD), MPC, and control Lyapunov function (CLF) based controllers.
\section{Problem Formulation}
\label{sec:problem_formulation}
This work addresses the local planning problem for safe navigation of velocity-controlled robotic systems with dynamics $\dot{\bm{x}} = \bm{G(x)u}$. We make the following assumptions.
\begin{enumerate}
    \item The robot is given a point cloud representation of the environment in the body frame through an installed LiDAR sensor or a perception pipeline (e.g., point cloud generated from Octomaps). 
    \item The existence of a point in space implies an obstacle at that point that has to be avoided.
    \item The provided point cloud is clean, and outliers are removed (or a high-quality LiDAR sensor is used).
    \item The existence of a state estimator for computing the reference controller command and updating the local target. Note that our body-centric CBFQP filter does not require this estimator. 
\end{enumerate}
Given a 2D occupancy map of the environment, we formulate a complete navigation pipeline by combining our local planner with off-the-shelf global path planners (e.g., RRT). 
\begin{rem}
    An example application scenario for this pipeline is when we have map (generated a priori or online) of an environment and want to navigate through this space while being safe against changes (e.g., adding new objects such as cans or boxes) without requiring the map or the global planner to be updated frequently.
\end{rem}
\section{Method}
\label{sec:method}
This section presents our proposed local planner and how it integrates into a navigation pipeline.

\subsection{Vessel: Point Cloud Based Control Barrier Function}
\label{sec:method_cbf}
\begin{figure*}[t!]
    \centering
    \includegraphics[width=0.95\textwidth]{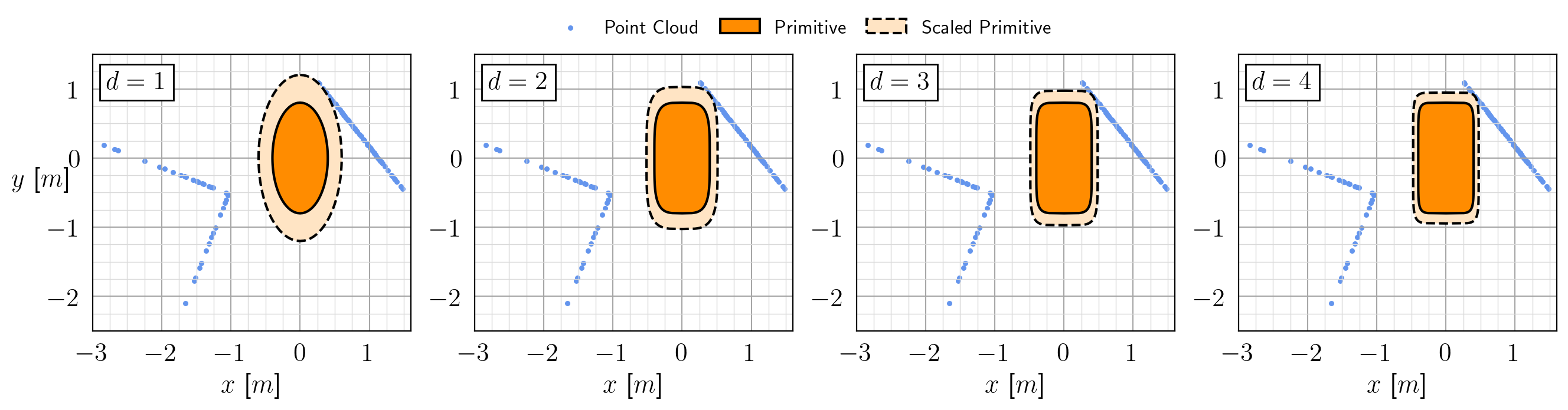}
    \caption{Visualization of the proposed point cloud CBF in a 2D scenario for different versions of higher-order ellipsoids. The blue dots represent the measured point cloud. The dark orange higher-order ellipse represents the unscaled primitive defined in~\eqref{eq:ellipsoid}, and the light orange higher-order ellipse represents the primitive after it is scaled by the scaling factor. The order of the ellipse is given in the upper left corner.}
    \label{fig:cbf_concept_visualization}
\end{figure*}
Assume the robot, in its body frame, is encapsulated by an ellipsoid $\mathcal{E}$ defined as
\begin{equation}
\label{eq:ellipsoid}
   \mathcal{E} = \Big\{\bm{p}\ \Big|\ \bm{p}^\top\bm{P}\bm{p} \leq 1\Big\}
\end{equation}
 and the matrix $\bm{P}$ given by
\begin{equation}
    \bm{P} = \begin{bmatrix}
        1/a^2 & 0 & 0\\
        0 & 1/b^2 & 0\\
        0 & 0 & 1/c^2
    \end{bmatrix}.
\end{equation}
with $a, b, c \in \mathbb{R}_+$ denoting the lengths of its semi-axes.
\begin{rem}
    We call this ellipsoid (including the higher-order ellipsoids as defined in~\eqref{eq:higher_order_ellipsoid}) a \textbf{\textit{Vessel}} that contains the robot.
\end{rem}
Let the obstacles be given as point cloud $\mathcal{P} \subset \mathbb{R}^3$ in the body frame of $\mathcal{E}$. Then, for each point $p_j \in \mathcal{P}$, compute the uniform scaling factor $\bm{s}_{j} \in \mathbb{R}_+$ between the ellipsoids and $\bm{p}_j$
\begin{equation}
    \bm{s}_{j}^2 = \bm{p}_j^\top\bm{P}\bm{p}_j.
\end{equation}
As seen in Fig.~\ref{fig:cbf_concept_visualization}, a larger-than-one scaling factor implies that the robot is not in collision with the point cloud. Thus, we can specify the safe set as the set of states where $\bm{s}_{j}^2 \geq 1$. Define $\bm{\alpha}_{j}$ and $\bm{h}_{j}$ as
\begin{equation}
    \bm{\alpha}_{j} = \bm{s}_{j}^2, \quad \bm{h}_{j} = \bm{\alpha}_{j} - \bm{\beta}
\end{equation}
where $\bm{\beta} \geq 1$. Then, the CBF between point cloud $\mathcal{P}$ and $\mathcal{E}$ can be computed as
\begin{equation}
    \bm{h} = \min_{j}{\bm{h}_{j}}.
\end{equation}
\begin{rem}
    From the derivations in~\cite{DaiKKK23}, we know that by ensuring safety with respect to the point with the smallest CBF value, we can ensure safety with respect to all points in the point cloud. 
\end{rem}
A visualization of the effect of scaling an ellipsoid using the smallest scaling factor is given in Fig.~\ref{fig:cbf_concept_visualization}. This formulation can be extended to higher-order ellipsoids defined as
\begin{equation}
\label{eq:higher_order_ellipsoid}
    \mathcal{E}^{\bm{d}} = \Big\{\bm{p}\Big|\bm{p}_{\bm{d}}(\bm{p}, \bm{d})^\top\bm{P}^{\bm{d}}\bm{p}_{\bm{d}}(\bm{p}, \bm{d}) \leq 1\Big\}
\end{equation}
with $\bm{d}\in\mathbb{N}$ representing the order of the ellipse and $\bm{p}_{\bm{d}}: \mathbb{R}^3\times\mathbb{N} \rightarrow \mathbb{R}^n$  defined as
\begin{equation}
\label{eq:high-order-scaling-factor}
    \bm{p}_{\bm{d}}(\bm{p}, \bm{d}) = \begin{bmatrix}
        x^{\bm{d}}\\
        y^{\bm{d}}\\
        z^{\bm{d}}
    \end{bmatrix}\in\mathbb{R}^3,\ \mathrm{with}\ \bm{p} = \begin{bmatrix}
        x\\
        y\\
        z
    \end{bmatrix}\in\mathbb{R}^3.
\end{equation}
For higher-order ellipsoids, the scaling factor and CBF would be computed as
\begingroup
\allowdisplaybreaks
\begin{subequations}
\begin{align}
    \bm{s}_{j} &= \sqrt[\bm{d}]{\bm{p}_{\bm{d}}(\bm{p}_j, \bm{d})^\top\bm{P}_i^{\bm{d}}\bm{p}_{\bm{d}}(\bm{p}_j, \bm{d})}\\
    \bm{\alpha}_{j} &= \bm{s}_{j}^{\bm{d}}\\
    \label{eq:high-order-cbf}
    \bm{h}_{j} &= \bm{\alpha}_{j} - \bm{\beta}.
\end{align}
\end{subequations}
\endgroup
\begin{rem}
    In this paper, ellipsoids refer to a higher-order ellipsoid of order one. The order for higher-order ellipsoids with $\bm{d} > 1$ will be specified.
\end{rem}
A visualization of using different orders of higher-order ellipsoids is also given in Fig.~\ref{fig:cbf_concept_visualization}. Given that the minimum function is not differentiable, we follow the formulation in~\cite{GoncalvesKTK24} and adopt a soft version, i.e., the $\mathrm{softmin}$ function, which is defined as
\begin{equation}
\label{eq:cbf_softmin_formulation}
    \softmin_{j}\bm{h}_{j} = -\delta\ln\Bigg(\frac{1}{N}\sum_{j=1}^{N}{e^{-\bm{h}_{j}/\delta}}\Bigg)
\end{equation}
where $N \in \mathbb{N}$ represents the number of points in the point cloud $\mathcal{P}$ and $\delta\in\mathbb{R}_+$. The $\mathrm{softmin}$ approximates the minimum function when $\delta$ is small. The expression in~\eqref{eq:cbf_softmin_formulation} is susceptible to numerical instability, which we address by adopting the following equivalent form~\cite{GoncalvesKTK24}
\begin{subequations}
\begin{align}
    \softmin_{j}\bm{h}_{j} &= \min_{j}\bm{h}_{j} + \softmin_{j}\widehat{\bm{h}}_{j}\\
    \widehat{\bm{h}}_{j} &= \bm{h}_{j} - \min_{k}\bm{h}_{k}.
\end{align}
\end{subequations}
With this, the final form of our CBF formulation becomes
\begin{equation}
    \bm{h} = \min_{j}\bm{h}_{j} - \delta\ln\Bigg(\frac{1}{N}\sum_{j=1}^{N}{e^{-\widehat{\bm{h}}_{j}/\delta}}\Bigg).
    \label{eq:point_cloud_cbf}
\end{equation}
The partial derivative of the CBF with respect to the position and orientation of the ellipsoid can be computed as
\begin{equation}
    \frac{\partial\bm{h}}{\partial(\bm{r}, \bm{q})} = \frac{\displaystyle\sum_{j=1}^{N}{\Bigg[e^{-\widehat{\bm{h}}_{j}/\delta}\frac{\partial\bm{h}_{j}}{\partial(\bm{r}, \bm{q})}}\Bigg]}{\displaystyle\sum_{j=1}^{N}{e^{-\widehat{\bm{h}}_{j}/\delta}}} \in \mathbb{R}^7
\end{equation}
where $\bm{r}\in\mathbb{R}^3$ and $\bm{q}\in\mathbb{H}$ ($\mathbb{H}$ represent the Hamiltonian) represents the position of and orientation as quaternions of $\mathcal{E}$ in the world frame, respectively. Next, we present the theoretical results that show the proposed point cloud CBF formulation in~\eqref{eq:point_cloud_cbf} is a valid CBF.

\begin{thm}
\label{thm:cbf_cont_diff}
    For a point cloud $\mathcal{P}$, the CBF proposed in~\eqref{eq:point_cloud_cbf} is continuously differentiable with respect to the position and orientation of its encapsulating higher-order ellipsoid.
\end{thm}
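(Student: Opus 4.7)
The plan is to first recognize that the two-term expression in \eqref{eq:point_cloud_cbf}, although written with a nondifferentiable $\min_j \bm{h}_j$, is algebraically identical to the pure softmin form in \eqref{eq:cbf_softmin_formulation}. Substituting $\widehat{\bm{h}}_j = \bm{h}_j - \min_k \bm{h}_k$ into the log-sum-exp and factoring the common factor $e^{\min_k \bm{h}_k/\delta}$ out of the sum produces a $-\min_k \bm{h}_k$ term that exactly cancels the leading $\min_j \bm{h}_j$. What remains is
\begin{equation*}
    \bm{h}(\bm{r}, \bm{q}) \;=\; -\delta\ln\!\Bigl(\tfrac{1}{N}\textstyle\sum_{j=1}^{N} e^{-\bm{h}_j(\bm{r}, \bm{q})/\delta}\Bigr),
\end{equation*}
a single smooth expression in which the nondifferentiable $\min$ never actually appears. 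Making this cancellation explicit at the outset is the cleanest way to dispose of the apparent nonsmoothness, and I expect this to be the only subtle step in the whole argument.

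Next I would establish that each per-point barrier value $\bm{h}_j(\bm{r}, \bm{q})$ is $C^\infty$ in $(\bm{r}, \bm{q})$. A world-frame point $\bm{p}_j^{\mathrm{w}}$ expressed in the body frame of $\mathcal{E}^{\bm{d}}$ is $\bm{p}_j = R(\bm{q})^\top(\bm{p}_j^{\mathrm{w}} - \bm{r})$, where $R(\bm{q})$ denotes the standard quaternion-to-rotation-matrix map whose entries are quadratic polynomials in the components of $\bm{q}$. The lift $\bm{p}_{\bm{d}}(\bm{p}_j, \bm{d})$ in \eqref{eq:high-order-scaling-factor} is polynomial in the components of $\bm{p}_j$, and the quadratic form $\bm{p}_{\bm{d}}^\top \bm{P}^{\bm{d}} \bm{p}_{\bm{d}}$ followed by the constant shift in \eqref{eq:high-order-cbf} is polynomial in those components. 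Composing polynomials yields polynomials, so $\bm{h}_j$ is a polynomial, hence $C^\infty$, function of $(\bm{r}, \bm{q})$.

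Finally, I would close the argument by invoking standard closure properties of $C^\infty$ functions under finite sums, scalar multiplication, the exponential map, and composition with the logarithm on its positive domain. Since $\tfrac{1}{N}\sum_{j=1}^{N} e^{-\bm{h}_j/\delta}$ is a finite sum of strictly positive smooth functions, it is strictly positive and smooth on all of $\mathbb{R}^3 \times \mathbb{H}$, so applying $-\delta\ln(\cdot)$ produces a smooth, and in particular continuously differentiable, function of $(\bm{r}, \bm{q})$, as claimed. The hard part is purely the algebraic rewrite in Step~1; once the $\min$ is shown to be cosmetic, the remainder is routine function-composition bookkeeping.
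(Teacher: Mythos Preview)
Your proposal is correct and in fact more complete than the paper's own argument. Both proofs share the same core step: writing the body-frame point as $R(\bm{q})^\top(\bm{p}_j^{\mathrm{w}}-\bm{r})$ and observing that $\bm{h}_j$ is a polynomial in the components of $(\bm{r},\bm{q})$, hence smooth. The paper does this by fully expanding the quaternion rotation and displaying $\bm{h}_j$ as an explicit sum of $2\bm{d}$-th powers, whereas you reach the same conclusion more economically by composing polynomial maps. The substantive difference is your Step~1: you explicitly verify that the $\min_j\bm{h}_j$ appearing in \eqref{eq:point_cloud_cbf} cancels algebraically against the log-sum-exp term, reducing the expression to the pure softmin \eqref{eq:cbf_softmin_formulation} before any differentiability is claimed. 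The paper's proof stops after establishing that each $\bm{h}_j$ is $C^1$ and never addresses why the combination in \eqref{eq:point_cloud_cbf}, which on its face contains a nondifferentiable $\min$, inherits that regularity; your rewrite closes precisely that gap.
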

\begin{proof}
    The proof utilizes the following properties: for two continuously differentiable functions $\bm{f}_1:\mathbb{R}^n\rightarrow\mathbb{R}$ and $\bm{f}_2:\mathbb{R}^n\rightarrow\mathbb{R}$, we have $\bm{f}_1(\bm{f}_2(\bm{x}))$, $\bm{f}_1(\bm{x}) + \bm{f}_2(\bm{x})$, and $\bm{f}_1(\bm{x})\bm{f}_2(\bm{x})$ are all continuously differentiable functions. For a point $\bm{p}_j$ measured in the body frame of $\mathcal{E}$, the corresponding world frame coordinate is
    \begin{equation}
        \bm{p}_{w, j} = \bm{Rp_j + r} = \begin{bmatrix}
            x_{w, j} & y_{w, j} &  z_{w, j}
        \end{bmatrix}^\top.
    \end{equation}
    where $\bm{R}\in\mathrm{SO}(3)$ represents the orientation of $\mathcal{E}$ in the world frame as a rotation matrix. Then, expanding the CBF in~\eqref{eq:high-order-cbf} for an order $\bm{d}$ ellipsoid yields
    \begingroup
    \allowdisplaybreaks
    \begin{align}
        \bm{h}_{j} = &\ \frac{1}{a^{2\bm{d}}}\Big[-(\bm{r}_{x} - x_{w, j})(2\bm{q}_{w}^2 + 2\bm{q}_{x}^2 - 1)\nonumber\\
              &-\ 2(\bm{r}_{y} - y_{w, j})(\bm{q}_{w}\bm{q}_{z} + \bm{q}_{x}\bm{q}_{y})\nonumber\\
              &+\ 2(\bm{r}_{z} - z_{w, j})(\bm{q}_{w}\bm{q}_{y} - \bm{q}_{x}\bm{q}_{z})\Big]^{2\bm{d}}\nonumber\\
              &+\ \frac{1}{b^{2\bm{d}}}\Big[2(\bm{r}_{x} - x_{w, j}) (\bm{q}_{w}\bm{q}_{z} - \bm{q}_{x}\bm{q}_{y})\nonumber\\
              &-\ (\bm{r}_{y} - y_{w, j})(2\bm{q}_{w}^2 + 2\bm{q}_{y}^2 - 1)\nonumber\\
              &-\ 2(\bm{r}_{z} - z_{w, j})(\bm{q}_{w}\bm{q}_{x} + \bm{q}_{y}\bm{q}_{z})\Big]^{2\bm{d}}\nonumber\\
              &+\ \frac{1}{c^{2\bm{d}}}\Big[-2(\bm{r}_{x} - x_{w, j})(\bm{q}_{w}\bm{q}_{y} + \bm{q}_{x}\bm{q}_{z})\nonumber\\
              &+\ 2(\bm{r}_{y} - y_{w, j})(\bm{q}_{w}\bm{q}_{x} + \bm{q}_{y}\bm{q}_{z})\nonumber\\
              &-\ (\bm{r}_{z} - z_{w, j})(2\bm{q}_{w}^2 + 2\bm{q}_{z}^2 - 1)\Big]^{2\bm{d}} - \bm{\beta}.
    \end{align}
    \endgroup
    Then, based on the aforementioned properties, we can see that $\bm{h}_{j}$ is continuously differentiable, with respect to $\bm{r} = [\bm{r}_x, \bm{r}_y, \bm{r}_z]$ and $\bm{q} = [\bm{q}_w, \bm{q}_x, \bm{q}_y, \bm{q}_z]$. 
\end{proof}

\begin{rem}
\label{rem:cbf_valid_u}
    The proposed point cloud CBF formulation is for velocity control, i.e., systems with the dynamics $\dot{\bm{x}} = \bm{G(x)u}$. Thus, $\bm{u} = \mathbf{0}$ is always a feasible solution to the CBFQP within the safe set $\mathcal{C}$.  
\end{rem}

\begin{rem}
\label{rem:cbf_valid_boundary}
    Apart from degenerate cases where the point cloud fully surrounds the robot, there always exists a control input that enables the robot to move away from the obstacles, which means that $\partial\bm{h}/\partial \bm{x} \neq 0$ for all $\bm{x}\in\partial\mathcal{C}$.
\end{rem}

\begin{thm}
    For systems with dynamics $\dot{\bm{x}} = \bm{G(x)u}$, the CBF formulation in~\eqref{eq:point_cloud_cbf} is a valid CBF.
\end{thm}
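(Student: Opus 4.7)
The plan is to verify the three requirements spelled out in the preliminaries for $\bm{h}$ to qualify as a CBF on $\mathcal{C}$: (i) continuous differentiability of $\bm{h}$ on an open neighborhood $\mathcal{D}\supset\mathcal{C}$; (ii) $\mathcal{C}$ is the $0$-superlevel set of $\bm{h}$ with $\partial\bm{h}/\partial\bm{x}\neq\bm{0}$ on $\partial\mathcal{C}$; and (iii) inequality \eqref{eq:CBF_constraint} holds on $\mathcal{D}$ for some extended class $\mathcal{K}_\infty$ function $\Gamma$. With these three items checked for the specific dynamics $\dot{\bm{x}}=\bm{u}$, the theorem follows immediately from the CBF definition in Section~\ref{sec:preliminaries}.

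For (i), I would simply cite Theorem~\ref{thm:cbf_cont_diff}: each $\bm{h}_j$ is continuously differentiable in $(\bm{r},\bm{q})$, and the stabilized $\softmin$ representation in \eqref{eq:point_cloud_cbf} is $\min_j\bm{h}_j$ plus $-\delta\ln(\tfrac{1}{N}\sum_j e^{-\widehat{\bm{h}}_j/\delta})$; the second summand is a smooth composition of $\bm{h}_j$'s, and the first summand cancels exactly with the shift inside $\widehat{\bm{h}}_j$, so the full expression is smooth even though $\min_j\bm{h}_j$ by itself is not. Property (ii) on the $0$-superlevel set is immediate from the definitions of $\bm{h}_j$ and $\bm{\beta}$, while the non-vanishing-gradient condition on $\partial\mathcal{C}$ is exactly the content of Remark~\ref{rem:cbf_valid_boundary}: excluding degenerate fully-surrounded configurations, on the boundary there is always a direction of motion that strictly increases $\bm{h}$, so $\partial\bm{h}/\partial\bm{x}\neq\bm{0}$.

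For (iii), I would specialize \eqref{eq:CBF_constraint} to $\bm{F}(\bm{x})=\bm{0}$ and $\bm{G}(\bm{x})=\bm{I}$, reducing it to
\begin{equation*}
    \sup_{\bm{u}\in\mathcal{U}}\frac{\partial\bm{h}(\bm{x})}{\partial\bm{x}}\bm{u}\ \geq\ -\Gamma(\bm{h}(\bm{x})).
\end{equation*}
Fixing a convenient choice such as $\Gamma(s)=\gamma s$ with $\gamma>0$ (which is extended class $\mathcal{K}_\infty$), I would invoke Remark~\ref{rem:cbf_valid_u}: the admissible input $\bm{u}=\bm{0}$ lies in $\mathcal{U}$, so the supremum is at least $0$, while the right-hand side satisfies $-\gamma\bm{h}(\bm{x})\leq 0$ for every $\bm{x}\in\mathcal{C}$. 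For states in $\mathcal{D}\setminus\mathcal{C}$ (where $\bm{h}(\bm{x})<0$), I would use the non-degeneracy guaranteed by Remark~\ref{rem:cbf_valid_boundary} in a neighborhood of $\partial\mathcal{C}$ to pick $\bm{u}$ aligned with $(\partial\bm{h}/\partial\bm{x})^{\top}$; since $\mathcal{U}$ contains an arbitrarily large such vector (velocity control is unbounded or, in practice, bounded well above the positive value $-\Gamma(\bm{h}(\bm{x}))$ near $\partial\mathcal{C}$), the inequality is met. Combining (i)--(iii) yields the claim.

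The main obstacle I anticipate is the degenerate case flagged in Remark~\ref{rem:cbf_valid_boundary}: if the point cloud encloses the ellipsoid on all sides, then $\partial\bm{h}/\partial\bm{x}$ could vanish on $\partial\mathcal{C}$ and no admissible $\bm{u}$ can locally increase $\bm{h}$, breaking (ii) and potentially (iii) for negative $\bm{h}$. I would explicitly exclude this pathological configuration (which is physically unavoidable in a navigation setting only if the robot is already trapped), so that the argument above goes through cleanly on the open domain $\mathcal{D}$ where the gradient is nonzero.
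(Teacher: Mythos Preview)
Your proposal is correct and follows essentially the same approach as the paper, which simply states that the result follows directly from Theorem~\ref{thm:cbf_cont_diff}, Remark~\ref{rem:cbf_valid_u}, and Remark~\ref{rem:cbf_valid_boundary}. You have in fact supplied considerably more detail than the paper's one-line proof, including the explicit treatment of $\mathcal{D}\setminus\mathcal{C}$ and the degenerate fully-surrounded case, neither of which the paper spells out.
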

\begin{proof}
    The proof directly follows from Theorem~\ref{thm:cbf_cont_diff}, Remark~\ref{rem:cbf_valid_u}, and Remark~\ref{rem:cbf_valid_boundary}.
\end{proof}

\subsection{Mariner: CBF-Based Safe Preview Control}
\label{sec:method_preview}
\begin{figure}[t!]
    \centering
    \includegraphics[width=0.45\textwidth]{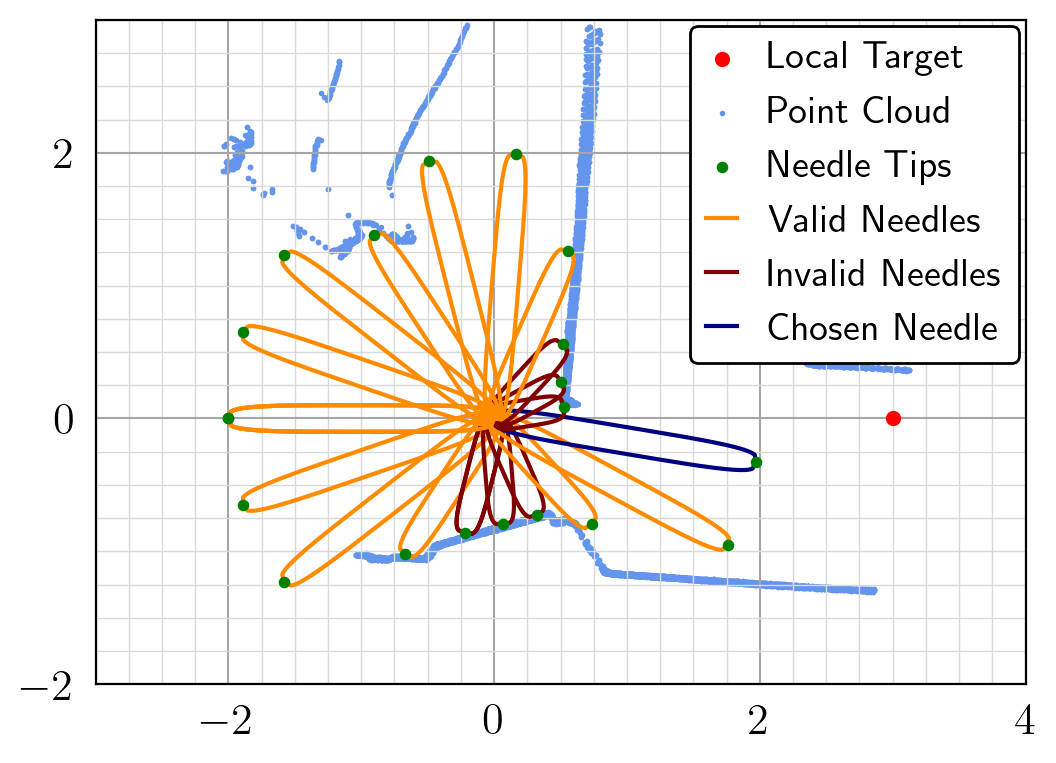}
    \caption{Visualization of the proposed preview control method in a 2D scenario with higher-order ellipsoids with order four.}
    \label{fig:preview_concept_visualization}
\end{figure}
\begin{figure*}
    \centering
    \vspace{0.3cm}
    \includegraphics[width=0.95\textwidth]{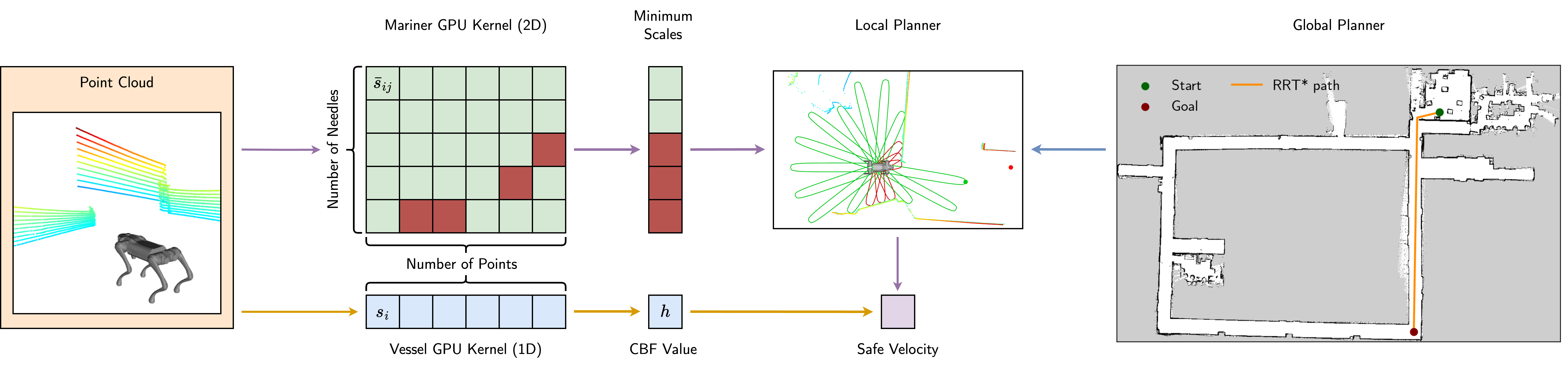}
    \caption{Illustration of the complete motion planning pipeline. The values along the orange arrows are updated at 10~\si{Hz} (LiDAR update frequency), the purple arrows are updated at 2~\si{Hz}, and the blue arrows are updated only once at the beginning of our experiments.}
    \label{fig:complete_pipeline}
\end{figure*}
\begin{figure*}
    \centering
    \includegraphics[width=0.95\textwidth]{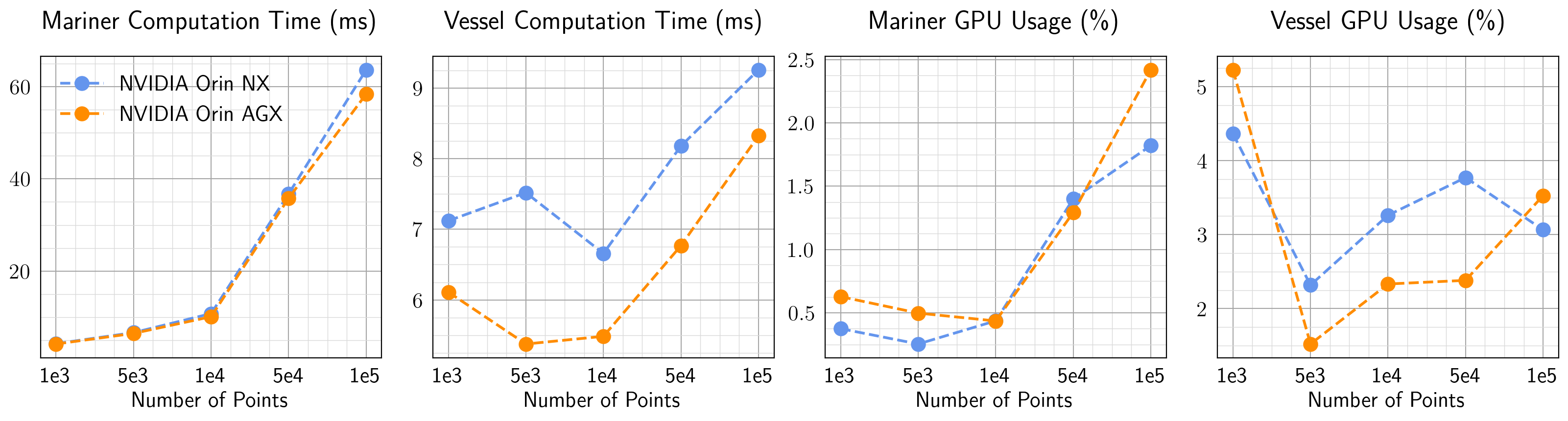}
    \caption{Comparison of the proposed algorithm's computation time and GPU usage on different computers. The GPU usage is computed when running the Mariner at 2~\si{Hz} and the Vessel at 10~\si{Hz}.}
    \label{fig:computation_time}
\end{figure*}
For CBFQP-based methods, one undesirable scenario is when the solution to~\eqref{eq:cbfqp} is $\bm{u} = 0$, which means the robot has fallen into an undesired equilibrium state~\cite{GoncalvesKTK24}. Since the proposed CBF in Section~\ref{sec:method_cbf} is for velocity control, $\bm{u} = 0$ means that the commanded velocity is zero and the robot is stuck at its current position. One way to avoid this issue is to use an MPC-based approach with a sufficiently long preview horizon. However, incorporating the current CBF formulation with an MPC-based approach is computationally expensive and generally creates a nonlinear and nonconvex numerical optimization problem. Therefore, we propose a novel preview controller that, by fixing the shape of the future path, transforms the preview motion planning problem into a collision-free inverse kinematics problem.

The simplification of the motion planning problem by using a fixed set of paths has been studied in dynamic window based approaches~\cite{FoxBT96}. However, in~\cite{FoxBT96}, the paths are generated using a fixed velocity profile, which is less flexible than desired. In this work, we propose \textbf{\textit{Mariner}}, a point cloud CBF based local planner that represents the paths in the configuration space and exploits GPU acceleration to check hundreds of paths (which are referred to as \textit{needles}) in a few milliseconds. Mathematically, the $i$-th needle is represented using a higher-order ellipsoid defined as
\begin{equation}
    \frac{(x - \bar{\bm{s}}_i\bar{a}_i)^{\bar{\bm{d}}}}{\bar{\bm{s}}_i^{\bar{\bm{d}}}\bar{a}_i^{\bar{\bm{d}}}} + \frac{y^{\bar{\bm{d}}}}{\bar{b}_i^{\bar{\bm{d}}}} + \frac{z^{\bar{\bm{d}}}}{\bar{c}_i^{\bar{\bm{d}}}} = 1
\end{equation}
where $\bar{a}_i, \bar{b}_i, \bar{c}_i \in \mathbb{R}$ represents the length of the semi-axes of the unscaled needle, $\bar{\bm{d}}\in\mathbb{N}$ represents the ellipsoid order used to represent the needle, and $\bar{\bm{s}}_i\in\mathbb{R}$ is the scaling factor of the needle. As shown in Fig.~\ref{fig:preview_concept_visualization}, the configuration space is discretized using $n_\text{needle} \in \mathbb{N}$ needles. Each needle is chosen to point towards a fixed angle within the robot's body frame obtained from a distribution function $\mathrm{dist}: \mathbb{N} \rightarrow \mathbb{R}$ that maps the needle index to its corresponding angle. For example, in Fig.~\ref{fig:preview_concept_visualization}, $n_\text{needle} = 19$ and
\begin{equation}
\label{eq:dist_func}
    \mathrm{dist}(i) = \theta_{\text{needle}, i} = 2\pi i/n_\text{needle} - \pi.
\end{equation}
Note that, unlike the Vessel, only the length is extended when the needles are scaled. The body frame of each needle is located at the robot center, with its $z$ axis pointing upwards and its $x$ axis pointing from the robot center to the tip of the needle. To compute the scaling factor $\bar{\bm{s}}_i$, we first rotate the point cloud by the needle angle to get the points within the needle's body frame
\begin{equation}
\label{eq:transform_pcd}
    \bm{p}_j = \begin{bmatrix}
        x_j &y_j & z_j
    \end{bmatrix}^\top = \mathbf{R}(\theta_{\text{needle}, i})^\top\bm{p}_{w, j}
\end{equation}
with $\theta_{\text{needle}, i} \in \mathbb{R}$ represents the angle the $i$-th needle is pointing towards, $\mathbf{R}(\theta_{\text{needle}, i}) \in \mathrm{SO(3)}$ represents the rotation matrix corresponding to the $i$-th needle angle, and $\bm{p}_j, \bm{p}_{w, j} \in \mathbb{R}^3$ representing the position of the $j$-th point in the needle's body frame and world frame, respectively.  For $\bm{p}_j$, define 
\begin{equation}
\label{eq:md}
    m_{ij}^{\bar{\bm{d}}} = 1 - y_j^{\bar{\bm{d}}}/\bar{b}_i^{\bar{\bm{d}}} + z_j^{\bar{\bm{d}}}/\bar{c}_i^{\bar{\bm{d}}}.
\end{equation}
Since the needle can only change its length, not its width or height, we only compute the scale against points with $m_{ij}^{\bar{\bm{d}}} > 0$. With this assumption, we have the scale of the $i$-th needle with respect to the $j$-th point as
\begin{equation}
    \bar{\bm{s}}_{ij} = x_j / [(1 + m_{ij})\bar{a}_i].
\end{equation}
For each needle, we compute the scale with respect to all valid points. Then, the minimum value among all of the scales is chosen as the scale of the needle. The scale is set to a predefined $\bar{\bm{s}}_{\max}$ if no valid points exists or if $\bar{\bm{s}}_i > \bar{\bm{s}}_{\max}$, i.e.,
\begin{equation}
    \bar{\bm{s}}_i = \begin{cases}
        \displaystyle\bar{\bm{s}}_{\max} & \displaystyle\text{if }\min_j{\bar{\bm{s}}_{ij}} > \bar{\bm{s}}_{\max}\text{ or no valid points}\\
        \displaystyle\min_j{\bar{\bm{s}}_{ij}}  & \displaystyle\text{if }\min_j{\bar{\bm{s}}_{ij}} \leq \bar{\bm{s}}_{\max}.
    \end{cases}
\end{equation}
Then, we check the size of the needle scales and only consider needles with $\bar{\bm{s}}_i \geq \bar{\bm{s}}_{\min}$ as valid needles (orange needles shown in Fig.~\ref{fig:preview_concept_visualization}) and treat the rest as invalid needles (red needles shown in Fig.~\ref{fig:preview_concept_visualization}). For all valid needles, we compute the distance between the scaled needle's tip (green dots in Fig.~\ref{fig:preview_concept_visualization}) and the target $\bm{p}^\star \in \mathbb{R}^3$ (red dot in Fig.~\ref{fig:preview_concept_visualization}). Finally, we choose the needle with its scaled tip closest to the target as the chosen needle (blue needle in Fig.~\ref{fig:preview_concept_visualization}) and its scaled tip as the preview control target. The pseudo-code of this process is given in Algorithm~\ref{alg:preview}. 

\subsection{Complete Pipeline}
\label{sec:pipeline}
\begin{algorithm}[t!]
\caption{\textit{\textbf{Mariner}}: CBF-based Safe Preview Control}
\begin{algorithmic}[1]
    \State Get point cloud data $\mathcal{P}$, number of needles $n_\text{needle}$, a distribution function $\mathrm{dist}$, and current global planner target $\bm{p}^\star$;
    \For{$i = 0\ \text{to}\ n_\text{needle} - 1$}
        \State $\theta_{\text{needle}, i} = \mathrm{dist}(i)$;
        \State Transform and filter $\mathcal{P}$ based on~\eqref{eq:transform_pcd} and~\eqref{eq:md};
        \State Compute the scale of this needle $\bar{\bm{s}}_i$;
    \EndFor
    \State Find the scaled needle with its tip closest to $\bm{p}^\star$;
    \State Set the preview control target to be the tip of the chosen needle from the last step; 
\end{algorithmic}
\label{alg:preview}
\end{algorithm}
The complete navigation pipeline includes the global planner, the Mariner, and the Vessel. Given a map of the environment, the global planner creates waypoints $\{\bm{p}_{k}^\star\}_{k = 1, \cdots, n_k} \subset \mathbb{R}^3$ between the current position of the robot and the target position, with $n_k \in \mathbb{N}$ being the number of waypoints. As mentioned in Section~\ref{sec:problem_formulation}, the map does not need to include transient structures. Then, the Mariner determines the local target ${}^\mathcal{B}\bm{p}_\text{target}^\star$. Finally, a performance controller is used to command the robot toward the local target, and the Vessel filters the performance controller command to ensure safety. Note that any sampling-based~\cite{KaramanF11} or search-based motion planning algorithm can be deployed for the global planner. For details regarding examples of the global planners that could be deployed, please refer to Section~\ref{sec:experiments}. The complete pipeline is also illustrated in Fig.~\ref{fig:complete_pipeline}.
\section{Experiments}
\label{sec:experiments}
This section shows the effectiveness of our method by comparing it against existing point cloud-based CBF formulations and validating its performance in real-world settings. 

\begin{figure}[t!]
    \centering
    \vspace{0.3cm}
    \includegraphics[width=0.45\textwidth]{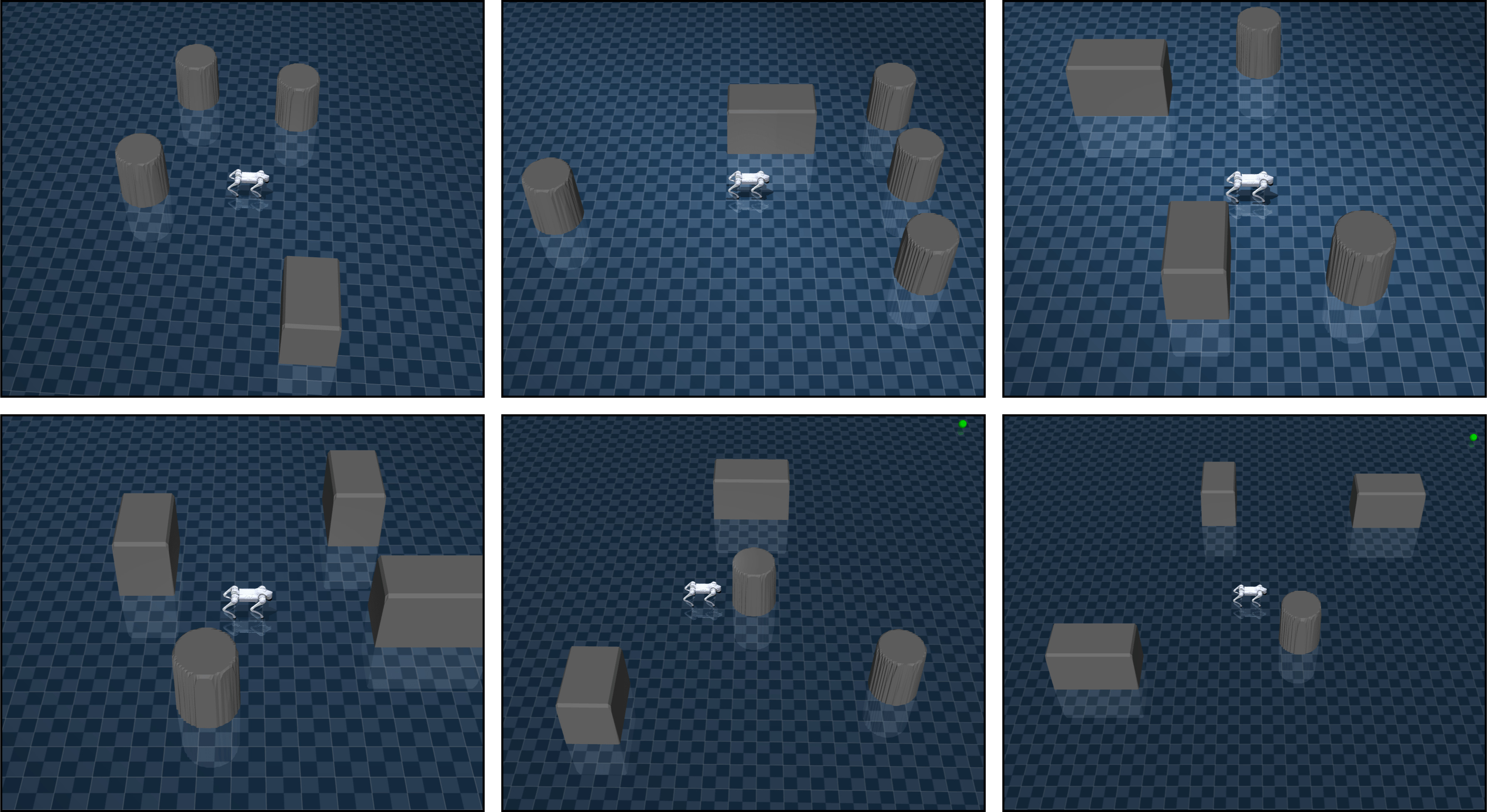}
    \caption{Examples of the environments used for the comparison studies.}
    \label{fig:sim_example}
\end{figure}

\subsection{Setup}
We validate our algorithm using Unitree B1 and Go2 robots. The simulated comparison studies are performed in MuJoCo with a Unitree Go2. All computations for real-world experiments are performed onboard using an NVIDIA Jetson Orin NX for the Unitree Go2 and a Jetson Orin AGX for the Unitree B1. Both robots obtain the point clouds using a 16-channel 3D LiDAR at 10~\si{Hz}. In the remainder of this section, the CBFQP models the dynamics of the quadruped system as a single integrator with orientation~\cite{CosnerRMUYAB22, UnluGCTK24}, i.e.,
\begin{equation}
\label{eq:quadruped_high_level_dynamics}
    \dot{\bm{x}} = \bm{u} = \begin{bmatrix}
        \bm{v}^\top & \bm{\omega}
    \end{bmatrix}^\top \in \mathbb{R}^4
\end{equation}
with $\bm{v}\in\mathbb{R}^3$ and $\bm{\omega}\in\mathbb{R}$ being the linear and angular velocity, respectively. The performance controller has the form of
\begingroup
\allowdisplaybreaks
\begin{subequations}
\label{eq:performance_controller}
\begin{align}
    \bm{u}_\mathrm{ref} &= \begin{bmatrix}
        \bm{v}^\top & \bm{\omega}    
    \end{bmatrix}^\top \in \mathbb{R}^4\\
    \bm{v} &= \bm{K}_v({}^\mathcal{B}\bm{p}_\text{target}^\star - {}^\mathcal{B}\bm{p}_\text{robot})\\
    \bm{\omega} &= \bm{K}_\omega({}^\mathcal{B}\bm{\phi}_\text{target} - {}^\mathcal{B}\bm{\phi}_\text{robot})
\end{align}
\end{subequations}
\endgroup
with $\bm{K}_v \in \mathbb{R}_{\succ0}^{3\times3}$ and $\bm{K}_\omega\in\mathbb{R}_+$ representing the gain matrices, ${}^\mathcal{B}\bm{\phi}_\text{target} \in \mathbb{R}$ the desired yaw angle which points towards the target, ${}^\mathcal{B}\bm{p}_\text{robot}\in\mathbb{R}^3$ the robot position in its body frame (is constantly zero), and ${}^\mathcal{B}\bm{\phi}_\text{robot}\in\mathbb{R}$ the robot yaw angle measured in its body frame (also is constantly zero). 
\begin{rem}
    The modeling of mobile robotic systems using velocity control is common in the literature~\cite{JianYLLLWL23, UnluGCTK24, CosnerRMUYAB22, GoncalvesKTK24, GoncalvesCTK24}. The accurate modeling of quadrupeds is outside the scope of this paper.
\end{rem}
For the CBFQP, we choose
\begin{equation}
    \Gamma(\bm{h}(\bm{x})) = \bar{\gamma}\bm{h}(\bm{x})
\end{equation}
where $\bar{\gamma} \in \mathbb{R}_+$. In the simulation studies, we use a learned locomotion policy~\cite{MargolisA22} to transform the commanded linear and angular velocity to joint actions. We use the robot's onboard proprietary controller in the experimental studies to track the commanded linear and angular velocity. For real-world experiments, the SLAM toolbox~\cite{MacenskiJ21} is used for mapping and localization. The Mariner and Vessel GPU kernels are created using NVIDIA \texttt{Warp}~\cite{warp2022}. Specifically, we create a 1D kernel for the Vessel and a 2D kernel for the Mariner. We show the computation time for an NVIDIA Jetson Orin NX and NVIDIA Jetson Orin AGX in Fig.~\ref{fig:computation_time}. In all the cases shown in Fig.~\ref{fig:computation_time}, we set $n_\mathrm{needle} = 100$. Additional experimental results, including the real-world experiments using Unitree B1, can be found in the accompanying video: \video.

\begin{table*}[t!]
\centering
\vspace{0.3cm}
\begin{threeparttable}[b]
\caption{Comparison Study Results\tnote{1}}
\label{tab:comparison}
\begin{tabular}[width=\textwidth]{m{6cm}<{\raggedright}m{1cm}<{\raggedright}m{1cm}<{\raggedright}m{1cm}<{\raggedright}m{1.5cm}<{\raggedright}m{1.5cm}<{\raggedright}m{2cm}<{\raggedright}}
\toprule
\textbf{Experiment} & $\hat{\bm{\mu}}_l$ $\downarrow$ & $\hat{\bm{\mu}}_s$ $\downarrow$ & $\hat{\bm{\mu}}_d$ $\downarrow$ & $\bm{\mu}_{t_\mathrm{CBF}}[\si{ms}]\ \downarrow$ & $\bm{\mu}_{t_\mathrm{LP}}[\si{ms}]\ \downarrow$ & \textbf{Success Rate} $\uparrow$\\
\midrule
\textit{Ours (Vessel +  Mariner)} & \textbf{1.00} & \textbf{1.00} & \textbf{1.00} & 0.78 & \textbf{0.37} & \textbf{100\%}\\
\textit{CBF with NMPC}~\cite{JianYLLLWL23} & 1.11 & 4.60 & 1.33 & 7.68 & 174.86 & \textbf{100\%}\\
\textit{Signed Distance CBF with Circulation}~\cite{UnluGCTK24} & 1.11 & 4.37 & 1.53 & \textbf{0.65} & N/A\tnote{2} & \textbf{100\%}\\
\textit{Our Mariner + CBF in}~\cite{JianYLLLWL23} & 1.01 & 1.68 & 1.46& 6.6 & 0.51 & \textbf{100\%}\\
\textit{Our Mariner + Signed Distance CBF}~\cite{UnluGCTK24} & 1.01 & \textbf{1.00} & 1.28 & 0.66 & 0.47 & \textbf{100\%}\\
\bottomrule
\end{tabular}
\begin{tablenotes}
    \item[1] $\downarrow$ means the lower that metric, the better the performance, and vice versa. Bold numbers represent the metric-wise best performance.
    \item[2] The N/A is due to the method in~\cite{UnluGCTK24} does not utilize a local planner.
\end{tablenotes}
\end{threeparttable}
\end{table*}

\begin{figure*}[t!]
    \centering
    \includegraphics[width=0.9\textwidth]{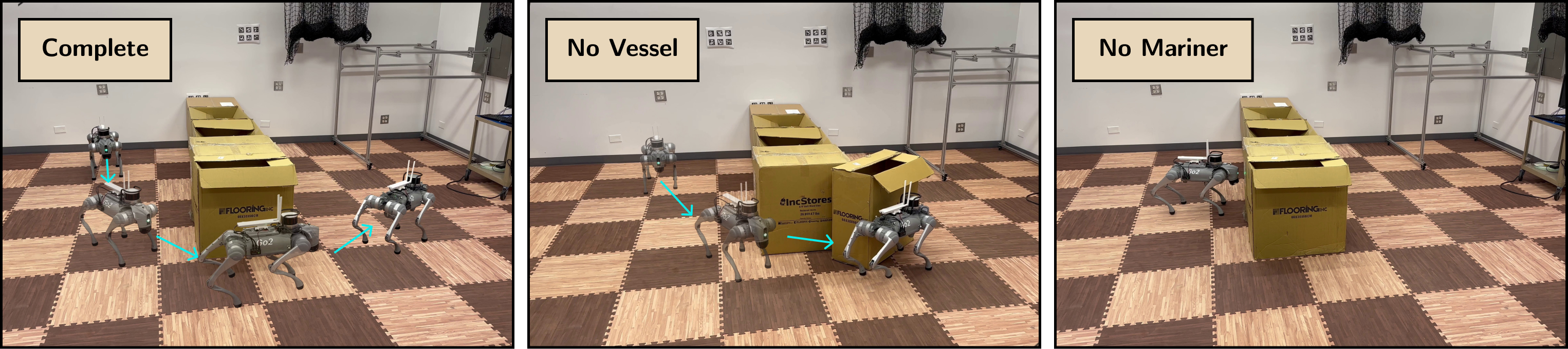}
    \caption{The experimental setting and ablation study results for showing the effectiveness of each component of the local planner where ``Complete" refers to having both the Mariner and Vessel, ``No Vessel" refers to only using the Mariner, and ``No Mariner" refers to only using the Vessel.}
    \label{fig:exp_settings}
\end{figure*}

\subsection{Simulation Comparison Studies}
We compare our proposed local planner with two baselines~\cite{JianYLLLWL23, UnluGCTK24} and with the CBFs from~\cite{JianYLLLWL23, UnluGCTK24} used along the Mariner on 50 different environments with randomly positioned obstacles. Since our proposed approach is a local planner, we only compare against the local planners in~\cite{JianYLLLWL23, UnluGCTK24}. In all comparison studies, the LiDAR measurements come at 10~\si{Hz}, and each scan has 1024 points. For all experiments, the robot starts at $(-4, -4)~\si{m}$ facing towards the positive $x$ direction and the goal is at $(5, 5)~\si{m}$. Examples of the environments are shown in Fig.~\ref{fig:sim_example}. Since the difficulty of each environment varies, the path length, smoothness, and minimal distance along the path are only compared relatively, i.e., $\text{metric}_\text{others} / \text{metric}_\text{ours}$ is given. In summary, we compare the performance of each of the methods using five metrics: the mean relative path length $\hat{\bm{\mu}}_l$, mean relative smoothness $\hat{\bm{\mu}}_s$ (smoothness is represented as the mean curvature along the path), mean relative minimal distance to obstacles along the path $\hat{\bm{\mu}}_d$, mean CBF computation time $\bm{\mu}_{t_\mathrm{CBF}}$, and mean local planner computation time $\bm{\mu}_{t_\mathrm{LP}}$. The comparison studies result is shown in Table~\ref{tab:comparison}. When compared against~\cite{JianYLLLWL23}, the minimum bounding ellipsoid (MBE) based CBF used by~\cite{JianYLLLWL23} cannot be generated reliably and is computationally much slower compared to the Vessel. Furthermore, the generated ellipsoid occasionally collides with the sphere modeling the robot, which results in the MPC starting in an infeasible state. In our comparison studies, we created safeguards such as reusing the old MPC solution or setting the velocity to zero when the MPC fails to find a solution. However, this might still lead to unsafe scenarios. Compared to our method, these issues make~\cite{JianYLLLWL23} computationally less efficient and stable. When comparing against~\cite{UnluGCTK24}, we noticed that~\cite{UnluGCTK24} generates longer and curvier paths, while the CBF computation time is similar. Additionally, the intermediate computations for the CBF in~\cite{UnluGCTK24} can generate very large numbers, which might cause numerical issues. We also tested combining our Mariner with the CBFs proposed in~\cite{JianYLLLWL23} and~\cite{UnluGCTK24}. In both cases, the inclusion of the Mariner leads to improved performance. Therefore, when compared against baseline approaches, our proposed local planner is computationally efficient, robust to a wide range of point cloud measurements, and can generate shorter and more direct paths. Additionally, spheres~\cite{JianYLLLWL23} and smoothed rectangles~\cite{UnluGCTK24} can both be modeled using the Vessel (see Fig.~\ref{fig:cbf_concept_visualization}).

\subsection{Experimental Deployment}
\label{sec:local_planner}
This section presents the experimental results of deploying our algorithm on a Unitree Go2 robot. The goal is to reach a target location while avoiding collision with obstacles (the boxes in Fig.~\ref{fig:exp_settings}). The semi-axes $(a, b, c)$ for the Vessels are $(0.5, 0.3, 0.2)~\si{m}$. The Needles are modeled using ellipsoids with semi-axes $(0.8, 0.1, 0.2)~\si{m}$, $\bar{\bm{d}}=2$, $n_\text{needle} = 100$, and the distribution function in~\eqref{eq:dist_func}. The generated trajectories are shown in Fig.~\ref{fig:exp_settings}. The robot successfully reaches the target position on the other side of the box wall when using both the Vessel and Mariner (denoted as complete in Fig.~\ref{fig:exp_settings}). The robot collides with the boxes if the Vessel is not activated. If the Mariner is not activated, the robot fails to escape from the equilibrium of the CBFQP controller. For results of our proposed pipeline on the Unitree B1 robot in a similar setting, please refer to the accompanying \href{https://youtu.be/P9NPv1f3kXQ}{video}. Next, we show the results of integrating our proposed local planner with a global planner. The RRT$^{\star}$ implementation in OMPL~\cite{SucanMK12} is used to generate a global plan, which is shown in Fig.~\ref{fig:complete_pipeline}. Then, random obstacles (four boxes as shown in Fig.~\ref{fig:global_planner}) are added to the scene. Finally, following the pipeline shown in Section~\ref{sec:pipeline}, the robot tracks the RRT-generated waypoints. The Mariner and Vessel settings are identical to the experiments in Section~\ref{sec:local_planner}. As shown in Fig.~\ref{fig:global_planner}, the local planner generates collision-free paths for the robot without getting stuck at any point along the trajectory. Additional experiments are in the accompanying \href{https://youtu.be/P9NPv1f3kXQ}{video}. For these additional experiments, the new obstacles were also not present during the mapping and planning stages. 
\begin{figure*}[t!]
    \centering
    \includegraphics[width=0.85\textwidth]{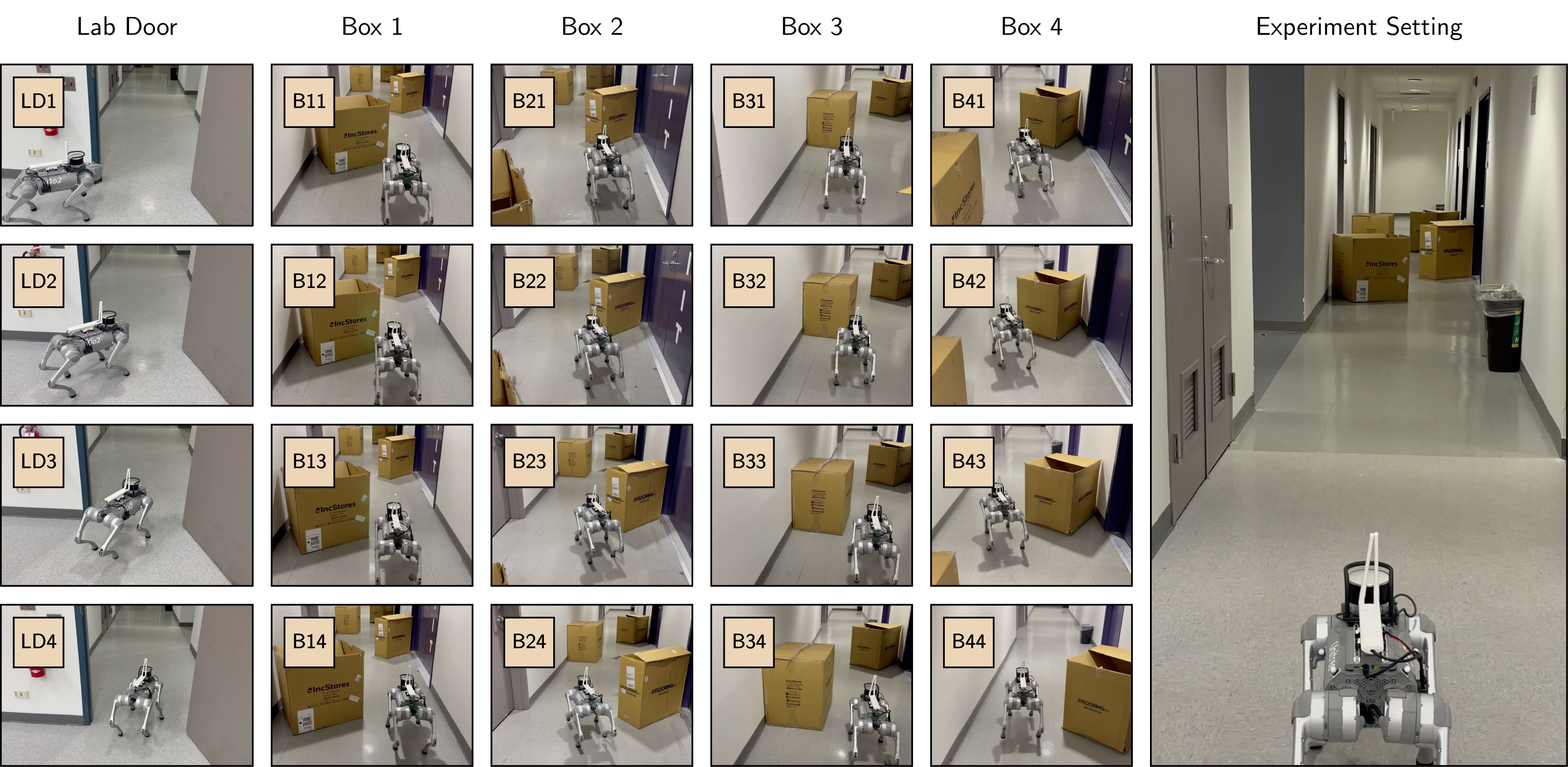}
    \caption{Result of our complete pipeline. There are five main obstacles in the illustrated settings. For each obstacle, four snapshots are provided to depict the motion generated by the proposed local planner.}
    \label{fig:global_planner}
\end{figure*}
\section{Limitations}
\label{sec:limitations}

The main limitations of our proposed local planner are as follows: (1) although the local planner greatly reduces the probability of getting stuck at spurious equilibria, it does not guarantee that such spurious equilibria never occur; (2) the proposed method does not explicitly consider input constraints, which could be potentially important to address in some applications. We plan to address these issues in future works.

\section{Conclusion}
\label{sec:conclusion}

In this work, we proposed \textbf{\textit{Vessel}} a novel point cloud based CBF formulation and \textbf{\textit{Mariner}} a novel local planner that can process point cloud data and move the CBF-based controller out of local equilibria. We have shown theoretically that the proposed CBF is valid. To show the effectiveness of our approach, we have tested it on quadrupedal robots in both simulation and the real world, along with the proposed motion planning pipeline. 

% \section{Acknowledgement}
% We would like to thank Tanishq Bhansali, Naren Devarakonda, Raktim Goswami, and Pranay Gupta for their assistance with the experiments. 
\bibliographystyle{IEEEtran}
\bibliography{IEEEabrv, refs.bib}
\end{document}